\documentclass[journal]{IEEEtran}

\ifCLASSINFOpdf
  \usepackage{graphicx}
\else
   \usepackage[dvips]{graphicx}
\fi
\usepackage{url}

\usepackage{amsmath}
\usepackage{amssymb}
\usepackage{mathptmx}
\usepackage{amsfonts}
\usepackage{amsthm}
\newtheorem{theorem}{Theorem}
\newtheorem{corollary}{Corollary}
\newtheorem{proposition}{Proposition}
\usepackage{mathtools}
\usepackage[utf8]{inputenc}
\newcommand{\E}{\mathbb{E}}
\newcommand{\kmax}{\kappa_{\max}}

\begin{document}

\title{A 1/R Law for Kurtosis Contrast in Balanced Mixtures}

\author{Yuda Bi, Wenjun Xiao, Linhao Bai, Vince Calhoun, \IEEEmembership{Fellow, IEEE}
\thanks{Y. Bi is with the Tri-Institutional Center for Translational Research in Neuroimaging and Data Science (TReNDS), Georgia State University, Georgia Institute of Technology, and Emory University, Atlanta, GA 30303 USA (e-mail: ybi@gsu.edu).}
\thanks{W. Xiao is with the Department of Computer Science, The George Washington University, Washington, DC 20052 USA.}
\thanks{L. Bai is with the School of Electrical and Computer Engineering, Georgia Institute of Technology, Atlanta, GA 30332 USA.}
\thanks{V. D. Calhoun is with the Tri-Institutional Center for Translational Research in Neuroimaging and Data Science (TReNDS), Georgia State University, Georgia Institute of Technology, and Emory University, Atlanta, GA 30303 USA, and also with the School of Electrical and Computer Engineering, Georgia Institute of Technology, Atlanta, GA 30332 USA (e-mail: vcalhoun@gsu.edu).}}

\markboth{IEEE Signal Processing Letters}
{Bi \MakeLowercase{\textit{et al.}}: Kurtosis Contrast Decay and Purification}
\maketitle

\begin{abstract}
Kurtosis-based Independent Component Analysis (ICA) weakens in wide, balanced mixtures. We prove a sharp redundancy law: for a standardized projection with effective width $R_{\mathrm{eff}}$ (participation ratio), the population excess kurtosis obeys $|\kappa(y)|=O(\kappa_{\max}/R_{\mathrm{eff}})$, yielding the order-tight $O(c_b\kappa_{\max}/R)$ under balance (typically $c_b=O(\log R)$). As an impossibility screen, under standard finite-moment conditions for sample kurtosis estimation, surpassing the $O(1/\sqrt{T})$ estimation scale requires $R\lesssim \kappa_{\max}\sqrt{T}$. We also show that \emph{purification}---selecting $m\!\ll\!R$ sign-consistent sources---restores $R$-independent contrast $\Omega(1/m)$, with a simple data-driven heuristic. Synthetic experiments validate the predicted decay, the $\sqrt{T}$ crossover, and contrast recovery.
\end{abstract}

\begin{IEEEkeywords}
Independent Component Analysis, Kurtosis, Redundancy, Source Separation
\end{IEEEkeywords}

\IEEEpeerreviewmaketitle

\section{Introduction}

Independent Component Analysis (ICA) recovers statistically independent latent sources from linear mixtures and is identifiable whenever at most one source is Gaussian~\cite{Comon1994}. This underpins Infomax~\cite{bell1995}, FastICA~\cite{Hyvarinen1999}, JADE~\cite{Cardoso1993JADE}, and related methods~\cite{HyvarinenOja2000,Cardoso1998}, with broad use in neuroimaging~\cite{Calhoun2001,Calhoun2012Review} and telecommunications. Excess kurtosis---the standardized fourth cumulant---is a central contrast function~\cite{Cardoso1989}, and kurtosis-type nonlinearities remain standard in FastICA. While kurtosis-based estimators are well studied~\cite{ChenBickel2006,MiettinenNordhausen2015,Koldovsky2006}, existing analyses treat contrast strength as given rather than quantifying how much contrast \emph{survives} as mixtures widen.

As effective width \(R\) grows, the CLT drives standardized projections toward Gaussianity~\cite{Comon1994}, flattening kurtosis contrast---a familiar effect at high model orders that lacks a population-level scaling law. Herrmann and Theis~\cite{HerrmannTheis2007} studied how finite-sample kurtosis estimation degrades as sources approach Gaussianity, but targeted estimation rather than population contrast. Here we prove a \emph{population-level impossibility}: under balanced projections, true excess kurtosis decays as \(1/R_{\mathrm{eff}}\), so increasing \(T\) alone cannot avert contrast collapse. Large-dimensional ICA studies~\cite{LiJordan2021,AuddyYuan2025,RicciFeature2025} address recovery error and convergence, complementing the population decay characterized here.

This regime arises naturally in neuroimaging: in group ICA~\cite{Calhoun2001,Calhoun2012Review,abou2011group}, higher model orders~\cite{LiAdali2007} increase active sources; group-level projections activate blocks of \(R\) subject-level components, and balance holds when no single component dominates. Proposition~\ref{prop:block_balance} shows this is generic for well-conditioned blocks, with \(c_b\) growing at most logarithmically. Our law thus explains a common pattern: growing model order raises \(R_{\mathrm{eff}}\), shrinking population kurtosis as \(1/R_{\mathrm{eff}}\) and yielding noisy, unreproducible components. Similar effects appear in large-scale multimodal fusion~\cite{silva2020TIP,Silva2024HBM}.

This decay is distinct from the usual \(O(1/\!\sqrt{T})\) estimation error: even with unlimited data, a broad balanced mixture has vanishing kurtosis. The bound characterizes contrast available to generic projections; ICA must locate the rare unbalanced directions that isolate sources, and this search becomes harder as the landscape flattens. We provide an explicit scaling law, a computable model-order ceiling, and a principled mechanism---purification---that restores non-vanishing contrast. This letter contributes:
\begin{itemize}
\item[(i)] A \emph{population-level impossibility law} (Theorem~\ref{thm:redundancy}): for balanced \(R\)-term mixtures, excess kurtosis scales as \(O(1/R)\), and this is order-tight.
\item[(ii)] A \emph{computable model-order diagnostic} (Corollary~\ref{cor:model_order}): a necessary (not sufficient) viability condition is \(R\lesssim\kmax\sqrt{T}\), linking mixture width to sample size.
\item[(iii)] A \emph{purification lower bound} (Theorem~\ref{thm:purification}): selecting \(m\ll R\) sources with a common kurtosis sign---equivalently, a targeted reduction of \(R_{\mathrm{eff}}\)---restores an \(R\)-independent contrast of order \(\Omega(1/m)\).
\end{itemize}

\section{Model and Main Results}

\subsection{Setup and Notation}

We observe \(x_t\in\mathbb{R}^p\) generated from independent sources \(s_t\in\mathbb{R}^k\) via
\begin{equation}\label{eq:csica_model}
x_t = A s_t + \eta_t,\qquad A \in \mathbb{R}^{p\times k},\ \ \E\eta_t=0,
\end{equation}
where \(\eta_t\) is independent noise with finite fourth moments. Sources are standardized: \(\E s_{tj}=0\), \(\E s_{tj}^2=1\) (and for Corollary~\ref{cor:model_order} we assume finite eighth moments to control the variance of the sample kurtosis estimator). For any unit-variance \(v\), define excess kurtosis \(\kappa(v):=\E[v^4]-3\). We present the square case \(p=k\) with invertible \(A\); the rectangular case follows via pseudoinverse.

For a direction \(u\in\mathbb{R}^p\), let \(\mathcal S:=\{j:\,a_j^\top u\neq 0\}\) be the active set and \(R:=|\mathcal S|\) the mixture width. Define normalized projection coefficients
\[
w_j:=\frac{a_j^\top u}{\sqrt{\sum_{\ell\in\mathcal S} (a_\ell^\top u)^2}},\qquad j\in\mathcal S,
\]
so that the standardized noiseless projection can be written as \(y=\sum_{j\in\mathcal S} w_j s_j\) with \(\sum_{j\in\mathcal S} w_j^2=1\). Unless stated otherwise, \(w_j\) are defined for the signal part \(x=As\). Let \(\kmax:=\max_{j\in\mathcal S}|\kappa(s_j)|\). We call the projection \emph{balanced} if \(\max_j |w_j|^2 \le c_b/R\) for some \(c_b>0\). When convenient, we relabel \(\mathcal S=\{1,\dots,R\}\) without loss of generality.

\begin{proposition}[Block balance]\label{prop:block_balance}
Fix a block index set \(\mathcal S\subseteq\{1,\dots,k\}\) with \(|\mathcal S|=R\). If, for a nonzero direction \(u\), the inner products \(c_j:=|a_j^\top u|\) satisfy
\begin{equation}\label{eq:rho_balance}
\max_{j\in\mathcal S} c_j^2 \;\le\; \frac{\rho}{R}\sum_{\ell\in\mathcal S} c_\ell^2
\end{equation}
for some \(\rho\ge 1\), then \(\max_{j\in\mathcal S}|w_j|^2\le \rho/R\).
In particular, if \(A_{\mathcal S}^\top A_{\mathcal S}=I_R\) and \(u\) is uniform on the unit sphere in \(\mathrm{span}(A_{\mathcal S})\), then \(\max_j|w_j|^2\le C(\log R)/R\) with probability at least \(1-2R^{-c}\) for absolute constants \(C,c>0\).
\end{proposition}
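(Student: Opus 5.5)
The first claim follows directly from the definition. Since \(w_j^2 = c_j^2/\sum_{\ell\in\mathcal S} c_\ell^2\), dividing \eqref{eq:rho_balance} by \(\sum_{\ell\in\mathcal S} c_\ell^2\) gives \(\max_j w_j^2\le\rho/R\). The sum is positive because \(u\neq 0\) and, on the active set, at least one \(c_j>0\); if some \(c_j=0\), that index can be treated as inactive with \(w_j=0\).

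For the random-direction part, the plan is to reduce to the coordinates of a uniform random vector on \(S^{R-1}\). With \(A_{\mathcal S}^\top A_{\mathcal S}=I_R\), the columns \(a_j\), \(j\in\mathcal S\), form an orthonormal basis of \(\mathrm{span}(A_{\mathcal S})\). Write \(u=A_{\mathcal S}\theta\), so that \(\|\theta\|=\|u\|=1\). Then \(\theta\) is uniform on \(S^{R-1}\), because the isometry \(A_{\mathcal S}\) carries the uniform measure on \(S^{R-1}\) onto the uniform measure on the unit sphere of the span. Also \(a_j^\top u=\theta_j\), hence \(\sum_{\ell\in\mathcal S} c_\ell^2=1\) and \(w_j^2=\theta_j^2\). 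It therefore suffices to show \(\max_j\theta_j^2\le C\log R/R\) with probability \(\ge 1-2R^{-c}\).

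For that estimate, represent \(\theta=g/\|g\|\) with \(g\sim N(0,I_R)\) and bound the two parts separately.
\begin{itemize}
\item \emph{Numerator.} The Gaussian tail \(P(|g_j|>t)\le 2e^{-t^2/2}\) and a union bound give \(P(\max_j g_j^2> 2(1+c)\log R)\le 2R^{-c}\).
\item \emph{Denominator.} A chi-square (Laurent--Massart) lower tail gives \(P(\|g\|^2<R/2)\le e^{-R/16}\), which is at most \(R^{-c}\) for \(R\) beyond a constant threshold.
\end{itemize}
On the intersection of the two good events, \(\max_j\theta_j^2\le 4(1+c)\log R/R\).

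For small \(R\), the trivial bound \(\theta_j^2\le1\) gives the claim once \(C\) is enlarged so that \(C\log R/R\ge 1\) in that range. The case \(R=1\) has \(\log R=0\) and is degenerate; I will exclude it or read the bound with \(\log R\) replaced by \(\max(\log R,1)\). Adjusting the constants then absorbs the two failure probabilities into the stated \(2R^{-c}\); for example, take \(c\) slightly smaller than the exponent from the union bound.

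The main obstacle is this constant bookkeeping: merging the two failure probabilities and handling small \(R\) so that a single pair of absolute constants \(C,c\) works uniformly. No conceptual difficulty is expected.
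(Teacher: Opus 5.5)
Your proposal is correct and takes essentially the same route as the paper: the deterministic part is the same one-line normalization, and for the random part the paper likewise reduces to the coordinates of a uniform vector on the sphere (via \(A_{\mathcal S}^\top u\)) and cites Gaussian maxima plus chi-square concentration. You make those two cited steps explicit (union bound giving \(2R^{-c}\), Laurent--Massart giving \(e^{-R/16}\), hence \(\max_j\theta_j^2\le 4(1+c)\log R/R\)); your small-\(R\) bookkeeping is fine, and for \(R=1\) (indeed whenever \(2R^{-c}\ge 1\)) the probability bound is vacuous, so no special reading of \(\log R\) is needed.
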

\begin{proof}
By definition, \(|w_j|^2=c_j^2/\sum_{\ell\in\mathcal S}c_\ell^2\); taking the maximum and applying \eqref{eq:rho_balance} gives \(\max_j|w_j|^2\le \rho/R\). For the probabilistic statement, under \(A_{\mathcal S}^\top A_{\mathcal S}=I_R\) the vector \(A_{\mathcal S}^\top u\) is uniformly distributed on the unit sphere; the claim follows from Gaussian maxima and chi-square concentration applied to the coordinate maxima of a random unit vector in \(\mathbb{R}^R\) (see, e.g., \cite[Ch. 3]{vershynin2018high}).
\end{proof}

Thus, within a well-conditioned block, a generic direction spreads its energy across \(R\) columns, producing balance with \(c_b\) growing at most logarithmically in \(R\).

\subsection{Redundancy Law}

\begin{theorem}[Sharp Redundancy Bound]\label{thm:redundancy}
Let \(\{s_j\}_{j=1}^R\) be independent sources with unit variance and bounded fourth moments. For any standardized projection
\[
y:=\frac{u^\top x}{\sqrt{\mathrm{Var}(u^\top x)}}
\]
in the noiseless case \(x=As\),
\[
|\kappa(y)| \le \kmax\sum_{j=1}^R |w_j|^4.
\]
Under balanced mixing (\(\max_{j}|w_j|^2 \le c_b/R\)):
\begin{equation}\label{eq:redundancy_bound}
|\kappa(y)| \le \frac{c_b\,\kmax}{R}.
\end{equation}
\emph{Order-tightness.} For equal weights \(|w_j|^2=1/R\) and aligned kurtoses \(\kappa(s_j)=\kappa_0\), one has \(|\kappa(y)|=|\kappa_0|/R\) exactly.
\end{theorem}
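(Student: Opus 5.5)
The plan is to use additivity of cumulants under independence. In the noiseless case the standardized projection is \(y=\sum_{j=1}^R w_j s_j\) with \(\sum_j w_j^2=1\). Indeed \(u^\top x=\sum_j (a_j^\top u)s_j\), and \(\mathrm{Var}(u^\top x)=\sum_{\ell}(a_\ell^\top u)^2\) because the sources are independent with unit variance. The first step is to record that \(y\) has mean zero and unit variance. Its fourth cumulant therefore equals its excess kurtosis, \(\kappa_4(y)=\E[y^4]-3(\E[y^2])^2=\kappa(y)\).

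The core identity comes next. The fourth cumulant is additive over independent summands and homogeneous of degree four under scaling. Together these give
\[
\kappa(y)=\kappa_4\Big(\sum_j w_j s_j\Big)=\sum_{j=1}^R w_j^4\,\kappa_4(s_j)=\sum_{j=1}^R w_j^4\,\kappa(s_j),
\]
where the last equality uses the unit variance of each \(s_j\). If I want to avoid citing cumulant theory, I will expand \(\E[y^4]\) directly. Independence and zero means kill every term containing an odd power of some \(s_j\). What remains is \(\sum_j w_j^4\E s_j^4+3\sum_{j\ne l}w_j^2w_l^2\). Writing \(3\sum_{j\neq l}w_j^2w_l^2=3(\sum_j w_j^2)^2-3\sum_j w_j^4=3-3\sum_j w_j^4\) and subtracting \(3\) yields the same identity. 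The bounded fourth moments guarantee that all these expectations are finite. This step needs the most care, but it is routine bookkeeping.

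The bounds then follow. The triangle inequality gives \(|\kappa(y)|\le\kmax\sum_j w_j^4\). Under balance,
\[
\sum_j w_j^4\le \Big(\max_j w_j^2\Big)\sum_j w_j^2\le c_b/R,
\]
which is \eqref{eq:redundancy_bound}. For order-tightness, set \(w_j^2=1/R\) and \(\kappa(s_j)=\kappa_0\) in the exact identity. Then \(\kappa(y)=R\cdot R^{-2}\kappa_0=\kappa_0/R\).
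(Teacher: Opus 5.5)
Your proposal is correct and follows the paper's proof: reduce to \(y=\sum_j w_j s_j\) with \(\sum_j w_j^2=1\), use cumulant additivity to get \(\kappa(y)=\sum_j w_j^4\kappa(s_j)\), then bound \(\sum_j w_j^4\le(\max_j w_j^2)\sum_j w_j^2\le c_b/R\). Your direct fourth-moment expansion is an optional elementary check of the same identity; it uses the zero-mean standardization from the paper's setup, and your coefficient bookkeeping is right.
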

\begin{proof}
By independence and unit variance, \(\mathrm{Var}(u^\top As)=\sum_{j=1}^R (a_j^\top u)^2\), so \(y=\sum_{j=1}^R w_js_j\) with \(\sum_{j=1}^R w_j^2=1\). Cross-cumulants vanish, giving \(\kappa(y)=\sum_{j=1}^R w_j^4\kappa(s_j)\) and \(|\kappa(y)| \le \kmax \sum_{j=1}^R |w_j|^4\).
Under balance, \(\sum_j |w_j|^4 \le (\max_j |w_j|^2)\sum_j |w_j|^2 \le c_b/R\). Sharpness: when \(|w_j|^2=1/R\) and \(\kappa(s_j)=\kappa_0\) for all \(j\), we obtain \(|\kappa(y)|=|\kappa_0|/R\).
\end{proof}

Cancellation between positive and negative source kurtoses may further reduce \(|\kappa(y)|\) below this bound; the inequality concerns worst-case magnitude.

\begin{corollary}[Effective-width law]\label{cor:reff}
Define the \emph{effective mixture width} \(R_{\mathrm{eff}}:=1/\sum_{j\in\mathcal S}w_j^4\) (a participation-ratio measure of source dispersion). Then, for any projection,
\begin{equation}\label{eq:reff_bound}
|\kappa(y)|\le \kmax/R_{\mathrm{eff}}.
\end{equation}
Balanced mixing implies \(R_{\mathrm{eff}}\ge R/c_b\), recovering the \(O(1/R)\) rate of Theorem~\ref{thm:redundancy}.
\end{corollary}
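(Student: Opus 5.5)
The plan is to read the corollary directly off the first inequality of Theorem~\ref{thm:redundancy}, then to handle the balanced case with the same Hölder-type estimate used in that proof.

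For the bound \eqref{eq:reff_bound}, we start from the representation in Theorem~\ref{thm:redundancy}. Its proof shows that, in the noiseless case, \(y=\sum_{j\in\mathcal S}w_js_j\) with \(\sum_j w_j^2=1\). Additivity of fourth cumulants over independent summands gives \(\kappa(y)=\sum_{j\in\mathcal S}w_j^4\kappa(s_j)\). Applying the triangle inequality and \(|\kappa(s_j)|\le\kmax\) yields \(|\kappa(y)|\le \kmax\sum_j w_j^4\). Since \(\sum_j w_j^4=1/R_{\mathrm{eff}}\) by definition, this is exactly \eqref{eq:reff_bound}. The phrase ``any projection'' only requires that \(R_{\mathrm{eff}}\) be well defined. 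We check this by noting that \(\sum_j w_j^4>0\) whenever \(\mathcal S\) is nonempty, which holds for any direction \(u\) with \(\mathrm{Var}(u^\top x)>0\).

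For the second claim, we bound \(\sum_j w_j^4\le(\max_j w_j^2)\sum_j w_j^2\le c_b/R\), using the balance condition and the normalization. Inverting gives \(R_{\mathrm{eff}}\ge R/c_b\). Substituting this into \eqref{eq:reff_bound} recovers \eqref{eq:redundancy_bound}.

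For completeness, we would also record that \(1\le R_{\mathrm{eff}}\le R\). The upper bound follows from Cauchy--Schwarz, \(1=(\sum_j w_j^2)^2\le R\sum_j w_j^4\). The lower bound follows from \(w_j^4\le w_j^2\). Together these justify reading \(R_{\mathrm{eff}}\) as a participation ratio interpolating between a single source and \(R\) equal sources.

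The only point needing care is the scope of ``any projection'' when noise is present. The weights are defined for the signal part, so the bound is stated for the noiseless standardized projection, as in Theorem~\ref{thm:redundancy}. Otherwise the argument is routine.
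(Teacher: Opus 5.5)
Your proposal is correct and follows the paper's argument: the paper also reads \eqref{eq:reff_bound} directly off the first inequality of Theorem~\ref{thm:redundancy} using \(\sum_j w_j^4 = 1/R_{\mathrm{eff}}\), and obtains the balanced case from the same estimate \(\sum_j w_j^4\le(\max_j w_j^2)\sum_j w_j^2\le c_b/R\) used in that theorem's proof. Your extra checks that \(R_{\mathrm{eff}}\) is well defined and satisfies \(1\le R_{\mathrm{eff}}\le R\) are correct but not needed for the statement.
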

\begin{proof}
Immediate from Theorem~\ref{thm:redundancy}: \(\kmax\sum|w_j|^4=\kmax/R_{\mathrm{eff}}\).
\end{proof}
This formulation is fully general---it applies to any weight distribution without the balance assumption. For unbalanced weights, \(R_{\mathrm{eff}}\) can be much smaller than \(R\), explaining the slower decay observed in Fig.~\ref{fig:fig1}(b).

\emph{Remark (noise).}
If \(x=As+\eta\) with independent Gaussian noise, fourth-cumulant additivity gives \(\kappa(y)=\mathrm{SNR}^4\,\kappa(y_s/\sigma_s)/(\mathrm{SNR}^2+1)^2\) where \(\mathrm{SNR}^2:=\sigma_s^2/\sigma_n^2\) is defined at the projected variance level for a fixed scalar projection; the signal kurtosis inherits the \(1/R\) decay under balance, attenuated further by the SNR factor.

\begin{figure*}[t]
  \centering
  \includegraphics[width=0.9\textwidth]{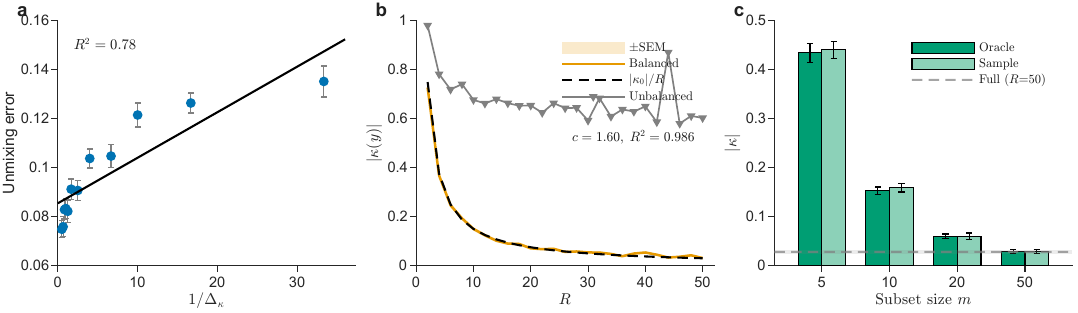}
  \caption{Empirical validation (mean $\pm$ SEM).
  \textbf{(a)} FastICA error $\mathrm{err}(W,A)$ increases with $1/\Delta_\kappa$ ($R^2{=}0.78$).
  \textbf{(b)} Balanced Student-$t$ mixtures (df$=8$, $\kappa_0{=}1.5$) show $|\hat\kappa(y)|\propto 1/R$ for $R=2,\ldots,50$ ($R^2{=}0.986$); unbalanced power-law weights decay more slowly. Inset at $R{=}50$: $\mathrm{std}(\hat\kappa)\sim \sigma_0/\sqrt{T}$ ($\sigma_0\approx5.3$); dotted line: $|\kappa_0|/R$.
  \textbf{(c)} At $R{=}50$ (df$\in[6,30]$), purification ($m{=}5$) increases contrast from $\approx0.03$ to $\approx0.43$ (oracle) / $\approx0.44$ (sample-based), $\sim14\times$.}
  \label{fig:fig1}
\end{figure*}

\emph{Remark (consequence for FastICA).}
Under balance, Theorem~\ref{thm:redundancy} predicts attenuation of kurtosis contrast for standardized candidate projections, so the empirical separation
\(\Delta_\kappa:=\min_{r\neq\ell}|\kappa(y_r)-\kappa(y_\ell)|\) between candidate components can become small. Since FastICA is known to be less well conditioned when kurtosis gaps are small~\cite{Hyvarinen1999,HyvarinenOja2000}, this provides a plausible degradation mechanism (supported by Fig.~\ref{fig:fig1}a), rather than a standalone conditioning theorem. This does not imply that latent-source parameters \(\kappa(s_j)\) change; the shrinkage is mixing-induced attenuation of observable contrast, quantified by Theorem~\ref{thm:redundancy}.

We use \(\sigma_0/\sqrt{T}\) as a conservative detectability scale (one-standard-deviation criterion); stronger guarantees would require algorithm-specific margins.

\begin{corollary}[Necessary model-order screening condition]\label{cor:model_order}
Under the conditions of Theorem~\ref{thm:redundancy}, assume the sample excess-kurtosis estimator \(\hat\kappa\) from \(T\) i.i.d.\ observations satisfies \(\mathrm{std}(\hat\kappa)\le \sigma_0/\sqrt{T}\) for a moment-dependent constant \(\sigma_0>0\) (e.g., under finite eighth moments; see~\cite{MiettinenNordhausen2015}). A necessary condition for the kurtosis contrast of a balanced projection to exceed the estimation noise floor, \(|\kappa(y)|>\sigma_0/\sqrt{T}\), is that its active width \(R\) obeys
\begin{equation}\label{eq:model_order}
R \;<\; \frac{c_b\,\kmax\,\sqrt{T}}{\sigma_0}.
\end{equation}
Equivalently, if an algorithm requires a minimum population contrast \(\kappa^*>0\), then \(R \le c_b\,\kmax/\kappa^*\).
\end{corollary}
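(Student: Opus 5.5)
The plan is to treat the corollary as a direct contrapositive of the balanced bound \eqref{eq:redundancy_bound} in Theorem~\ref{thm:redundancy}. No new probabilistic argument is needed. The estimator assumption \(\mathrm{std}(\hat\kappa)\le\sigma_0/\sqrt{T}\) only serves to motivate \(\sigma_0/\sqrt{T}\) as the threshold, and the statement to be proved is a purely population-level implication: if \(|\kappa(y)|>\sigma_0/\sqrt{T}\) and the projection is balanced with constant \(c_b\), then \(R<c_b\kmax\sqrt{T}/\sigma_0\).

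First, I will fix a balanced projection, so that \(\max_j|w_j|^2\le c_b/R\), and invoke Theorem~\ref{thm:redundancy} to obtain \(|\kappa(y)|\le c_b\kmax/R\). Next, I will chain this with the hypothesis:
\[
\frac{\sigma_0}{\sqrt{T}}<|\kappa(y)|\le\frac{c_b\,\kmax}{R}.
\]
Since \(R\ge 1\), \(\sigma_0>0\) and \(T\ge1\), all quantities are positive. Rearranging therefore gives \(R<c_b\kmax\sqrt{T}/\sigma_0\), which is \eqref{eq:model_order}. The strict inequality is inherited from the strict hypothesis.

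For the equivalent formulation, I will repeat the same chain with \(\kappa^*\) in place of \(\sigma_0/\sqrt{T}\). The requirement \(|\kappa(y)|\ge\kappa^*\) combined with \(|\kappa(y)|\le c_b\kmax/R\) yields \(R\le c_b\kmax/\kappa^*\). Here the inequality is non-strict because the requirement is stated as a minimum. I will also note explicitly that this is only necessary: the converse fails because the inequality in Theorem~\ref{thm:redundancy} may be loose, for instance through sign cancellation among the \(\kappa(s_j)\).

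There is no real obstacle in this argument. The only point needing care is conceptual. The noise-floor comparison is made between the population \(|\kappa(y)|\) and the standard-deviation scale of \(\hat\kappa\); the finite-eighth-moment assumption, via \cite{MiettinenNordhausen2015}, is what makes \(\sigma_0\) finite, and it plays no further role in the derivation. I would also remark that the same argument, applied through Corollary~\ref{cor:reff}, gives the balance-free version \(R_{\mathrm{eff}}<\kmax\sqrt{T}/\sigma_0\).
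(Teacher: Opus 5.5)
Your proposal is correct and matches the paper's proof. Both chain the hypothesis $|\kappa(y)|>\sigma_0/\sqrt{T}$ (or $|\kappa(y)|\ge\kappa^*$) with the balanced bound $|\kappa(y)|\le c_b\kmax/R$ from Theorem~\ref{thm:redundancy}, rearrange, and stress that the result is only a necessary screening condition; your extra remark that Corollary~\ref{cor:reff} gives the balance-free version $R_{\mathrm{eff}}<\kmax\sqrt{T}/\sigma_0$ is also valid.
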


\begin{proof}
Theorem~\ref{thm:redundancy} gives \(|\kappa(y)|\le c_b\kmax/R\). Requiring \(c_b\kmax/R>\sigma_0/\sqrt{T}\) yields~\eqref{eq:model_order}. Thus~\eqref{eq:model_order} is an impossibility-screening necessary condition (from an upper bound), not a sufficiency guarantee.
\end{proof}

The bound \(R<c_b\kmax\sqrt{T}/\sigma_0\) is a finite-sample ceiling, while \(R\le c_b\kmax/\kappa^*\) is a population ceiling; the operative constraint is the tighter of the two. The viable range expands only as \(\sqrt{T}\) (doubling tolerable \(R\) requires quadrupling \(T\)). Here \(R\) is projection-dependent and typically smaller than \(k\); conservative screening sets \(R\approx k\), while tighter screening uses the mean activation width. A practical proxy for \(R_{\mathrm{eff}}\) is the participation ratio \(1/\!\sum_j w_j^4\) of a component's loading vector.

\emph{Remark (computable screening).}
Given data \(X\in\mathbb{R}^{p\times T}\) and candidate model order \(k\): (1) whiten and project onto the leading \(k\) principal components; (2) compute sample kurtosis \(\hat\kappa_i\) for each PC direction; (3) set \(\hat\kappa_{\max}=\max_i|\hat\kappa_i|\) and estimate \(\hat\sigma_0\) via bootstrap; (4) choose a conservative log-factor \(c_b\) consistent with Proposition~\ref{prop:block_balance} (e.g., \(c_b=4\log k\)); and (5) compute \(k_{\max}^{(\mathrm{screen})}=c_b\hat\kappa_{\max}\sqrt{T}/\hat\sigma_0\). If \(k\gg k_{\max}^{(\mathrm{screen})}\), kurtosis contrast collapse is expected; reduce model order or apply purification. PC directions are a \emph{coarse proxy} for ICA-active width and are intended to detect hopeless regimes, not to predict contrast precisely.

\subsection{Purification via Sign-Consistent Subset Selection}

Theorem~\ref{thm:redundancy} implies contrast collapses when effective width is large; purification counteracts this by restricting to a sign-consistent subset of \(m\) sources and renormalizing, yielding effective width at most \(m\) and contrast lifted from \(O(1/R_{\mathrm{eff}})\) to \(\Omega(1/m)\), independent of \(R\).

By the pigeonhole principle, among \(R\) sources with non-zero kurtosis, at least \(R/2\) share the same kurtosis sign, so for any target size \(m\ll R\) a sign-consistent subset of size \(m\) can always be selected.

\begin{theorem}[Purification Lower Bound]\label{thm:purification}
Suppose \(\{s_j\}_{j=1}^R\) are independent with non-zero kurtosis. Let \(y=\sum_{j=1}^R w_j s_j\) with \(\sum_{j=1}^R w_j^2=1\).
Given a sign-consistent subset \(\mathcal{M}\subseteq \{1,\dots,R\}\) with \(|\mathcal{M}|=m\) and \(\sum_{j\in\mathcal M}w_j^2>0\), define the purified mixture by restricting and re-normalizing the original weights:
\[
y_{\mathrm{pur}}=\sum_{j\in \mathcal{M}} \tilde w_j s_j,\qquad
\tilde w_j:=\frac{w_j}{\sqrt{\sum_{\ell\in\mathcal M}w_\ell^2}},\qquad
\sum_{j\in \mathcal{M}} \tilde w_j^2=1,
\]
which satisfies the \(R\)-independent lower bound
\begin{equation}\label{eq:purification_bound}
|\kappa(y_{\mathrm{pur}})| \ge \frac{\kappa_{\min,\mathcal{M}}}{m},
\end{equation}
where \(\kappa_{\min,\mathcal{M}}:=\min_{j\in\mathcal{M}}|\kappa(s_j)|\).
\end{theorem}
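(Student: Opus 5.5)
The plan is to reuse the cumulant identity from the proof of Theorem~\ref{thm:redundancy}, apply it to the purified mixture, and then bound the result from below using sign consistency and the Cauchy--Schwarz inequality.

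\textbf{Step 1 (cumulant formula).} The purified variable \(y_{\mathrm{pur}}=\sum_{j\in\mathcal M}\tilde w_j s_j\) is a linear combination of independent, unit-variance sources with \(\sum_{j\in\mathcal M}\tilde w_j^2=1\). So \(y_{\mathrm{pur}}\) has unit variance, and its excess kurtosis equals its fourth cumulant. Cross-cumulants of independent variables vanish, and the fourth cumulant is homogeneous of degree four, exactly as in the proof of Theorem~\ref{thm:redundancy}. Hence
\[
\kappa(y_{\mathrm{pur}})=\sum_{j\in\mathcal M}\tilde w_j^4\,\kappa(s_j).
\]

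\textbf{Step 2 (sign consistency removes cancellation).} All \(\kappa(s_j)\), \(j\in\mathcal M\), share a common sign \(\epsilon\in\{\pm1\}\), and each \(\tilde w_j^4\ge 0\). Therefore every term \(\tilde w_j^4\kappa(s_j)\) has sign \(\epsilon\) (or is zero), and the absolute value passes inside the sum:
\[
|\kappa(y_{\mathrm{pur}})|=\sum_{j\in\mathcal M}\tilde w_j^4|\kappa(s_j)|\ge \kappa_{\min,\mathcal M}\sum_{j\in\mathcal M}\tilde w_j^4.
\]

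\textbf{Step 3 (lower bound on the fourth-power sum).} Apply Cauchy--Schwarz to the vectors \((\tilde w_j^2)_{j\in\mathcal M}\) and \((1)_{j\in\mathcal M}\):
\[
1=\Big(\sum_{j\in\mathcal M}\tilde w_j^2\Big)^2\le m\sum_{j\in\mathcal M}\tilde w_j^4 .
\]
This gives \(\sum_{j}\tilde w_j^4\ge 1/m\). Equivalently, the effective width of the purified mixture satisfies \(R_{\mathrm{eff}}\le m\). Combining with Step 2 yields \(|\kappa(y_{\mathrm{pur}})|\ge\kappa_{\min,\mathcal M}/m\), which does not depend on \(R\).

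\textbf{Main obstacle.} The only point needing care is Step 2. Without sign consistency, terms of opposite sign could cancel and the bound would fail. A secondary point is that the normalization hypothesis \(\sum_{\mathcal M}w_j^2>0\) is what makes \(\tilde w\) well defined. Some \(\tilde w_j\) may be zero, but Cauchy--Schwarz still holds over all \(m\) indices, so the bound is unaffected. The equal-weight case \(\tilde w_j^2=1/m\) shows the bound is attained when all \(|\kappa(s_j)|\) are equal.
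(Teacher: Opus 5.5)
Your proposal is correct and matches the paper's proof step for step: cumulant additivity gives \(\kappa(y_{\mathrm{pur}})=\sum_{j\in\mathcal M}\tilde w_j^4\kappa(s_j)\), sign consistency lets you factor out \(\kappa_{\min,\mathcal M}\), and Cauchy--Schwarz gives \(\sum_{j\in\mathcal M}\tilde w_j^4\ge 1/m\). Your extra remarks, on zero weights and on equality in the equal-weight, equal-kurtosis case, are accurate but not needed for the bound.
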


\begin{proof}
By cumulant additivity, \(\kappa(y_{\mathrm{pur}})=\sum_{j\in\mathcal M}\tilde w_j^4 \kappa(s_j)\).
Under sign-consistency, \(|\kappa(y_{\mathrm{pur}})| \ge \kappa_{\min,\mathcal{M}}\sum_{j\in\mathcal M}\tilde w_j^4\).
By Cauchy--Schwarz, \(\sum_{j\in\mathcal M}\tilde w_j^4 \ge (\sum_{j\in\mathcal M}\tilde w_j^2)^2/m = 1/m\).
\end{proof}

Bound~\eqref{eq:purification_bound} is unconditional: it does not require \(|\kappa(y)|\le C/R\); Theorem~\ref{thm:redundancy} only motivates the regime where purification is most needed. Via Corollary~\ref{cor:reff}, purification yields \(R_{\mathrm{eff}}^{(\mathrm{pur})}\le m\), so \eqref{eq:purification_bound} reads \(|\kappa(y_{\mathrm{pur}})|\ge \kappa_{\min,\mathcal{M}}/R_{\mathrm{eff}}^{(\mathrm{pur})}\). Under additive Gaussian noise, the purified contrast inherits the same SNR attenuation factor as in the noise remark above, since Gaussian noise contributes zero fourth cumulant.

Sign-consistency is essential: if selected sources have mixed kurtosis signs, cancellations can drive \(\kappa(y_{\mathrm{pur}})\) toward zero. In many homogeneous families (e.g., sparse/speech vs.\ bounded/uniform-like sources), a dominant sign is plausible.

A simple data-driven variant requires no oracle. Given candidate components \(\hat s_j\) from a preliminary separation (e.g., PCA + FastICA at moderate model order; even if components remain partially mixed due to contrast degradation, the additive nature of higher-order cumulants robustly ensures that dominant sources dictate the aggregate kurtosis signs), purification selects a subset:
\begin{enumerate}
\item Compute sample kurtoses \(\hat\kappa_j\) of \(\hat s_1,\dots,\hat s_R\).
\item Set \(\mathrm{sign}^*:=\mathrm{sign}\!\bigl(\sum_j\hat\kappa_j\bigr)\).
\item Among indices with \(\mathrm{sign}(\hat\kappa_j)=\mathrm{sign}^*\), select the top-\(m\) by \(|\hat\kappa_j|\).
\item If \(\bigl|\sum_j\hat\kappa_j\bigr|<\tau\) (a small positive empirical threshold, e.g., \(\tau=0.1\)), evaluate both sign choices and keep the subset with larger \(|\hat\kappa(y_{\mathrm{pur}})|\).
\item Project the full data matrix onto the subspace spanned by the estimated mixing vectors of the selected candidates (e.g., using the corresponding columns of the pseudo-inverse of the preliminary unmixing matrix), and re-run ICA on this reduced-dimensional data.
\end{enumerate}
We evaluate this heuristic alongside the oracle in Fig.~\ref{fig:fig1}c, and find comparable contrast restoration.

If the selected subset yields near-zero contrast, this indicates sign cancellation or weak non-Gaussianity; a practical check is to assess the stability of \(\kappa(y_{\mathrm{pur}})\) under bootstrap resampling and then enlarge or re-select \(\mathcal M\) if needed.

\begin{figure}[t]
  \centering
  \includegraphics[width=0.5\textwidth]{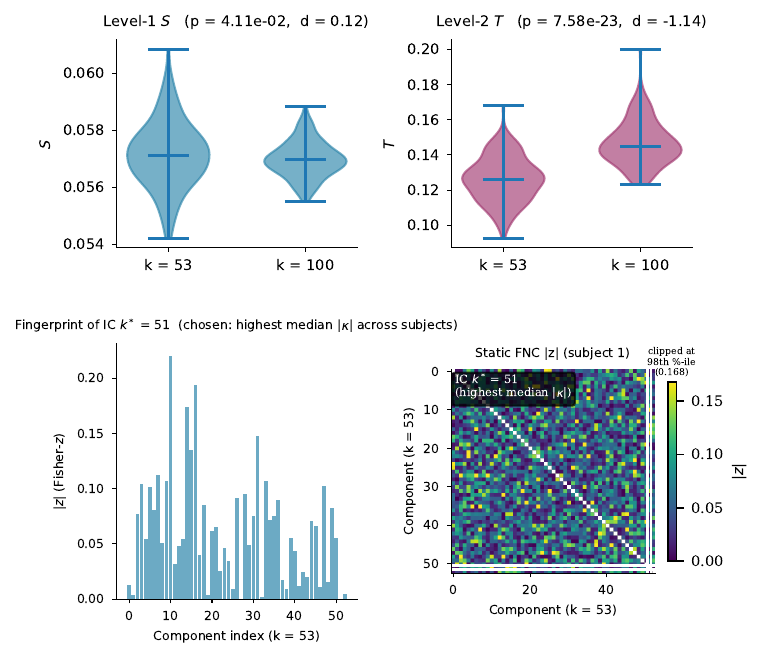}
  \caption{Paired kurtosis-gap comparison in COBRE group ICA ($n=155$) for model orders $k=53$ and $k=100$, with FNC insets.
  Top: two-level FNC summaries (edge-level $S$ and fingerprint-level $F$).
  Bottom: example connectivity fingerprint ($|z|$) and one representative subject's static FNC matrix for component $k^{*}=51$ in the $k=53$ solution (with $|z|$ clipped at the 98th percentile for visualization).
  FNC insets are qualitative context only and are not used in the redundancy-law inference.}
  \label{fig:fig2}
\end{figure}

\section{Experiments}

Sources are independent Student's \(t\)-variables (\(\mathrm{df}>4\)), standardized to zero mean and unit variance, with population excess kurtosis \(\kappa_{\mathrm{t}}(\mathrm{df})=6/(\mathrm{df}-4)\). Results report mean \(\pm\) SEM over independent trials. Sample excess kurtosis is
\(\hat\kappa(v)=\frac{1}{T}\sum_{t}((v_t-\bar v)/\hat\sigma_v)^4-3\).
Unmixing error for symmetric FastICA (\(g(y)=y^3\)) is
\(\mathrm{err}(W,A)=\min_{P,\Lambda}\|WA-P\Lambda\|_F/\sqrt{k}\), where \(P\) is a permutation matrix and \(\Lambda\) is a non-singular diagonal matrix.
(Note: Strictly speaking, guaranteeing the asymptotic \(1/\sqrt{T}\) rate in Corollary~\ref{cor:model_order} requires \(\mathrm{df}>8\) for finite eighth moments. However, for practical finite sample sizes, we empirically observe a stable \(1/\sqrt{T}\) noise-floor crossing even at \(\mathrm{df}=8\).)

\textbf{Conditioning (Fig.~\ref{fig:fig1}a).}
With a well-conditioned \(5\times5\) mixing matrix (\(T=5000\), 30 trials), we vary the kurtosis gap \(\Delta_\kappa\) via source degrees-of-freedom. As \(1/\Delta_\kappa\) increases from \(0.5\) to \(33\), mean unmixing error rises from \(\approx0.07\) to \(\approx0.14\) (\(R^2=0.78\)), consistent with a \(1/\Delta_\kappa\)-type amplification trend.

\textbf{Redundancy (Fig.~\ref{fig:fig1}b).}
Balanced mixtures \(y=\frac{1}{\sqrt{R}}\sum_{j=1}^R s_j\) with i.i.d.\ \(\mathrm{df}=8\) sources (\(\kappa_{\mathrm{t}}=1.5\)), \(T=20{,}000\), and \(R=2,\ldots,50\) exhibit
\(|\hat\kappa(y)|\approx c_{\mathrm{fit}}/R\) with \(c_{\mathrm{fit}}\approx1.60\) (\(R^2=0.986\)), confirming the \(O(1/R)\) law. An unbalanced power-law mixture decays more slowly, consistent with \(R_{\mathrm{eff}}\ll R\) (Corollary~\ref{cor:reff}).
The inset shows \(\mathrm{std}(\hat\kappa)\sim\sigma_0/\sqrt{T}\) (\(\sigma_0\approx5.3\)), crossing the population contrast near \(T\approx3\times10^4\), matching~\eqref{eq:model_order}.

\textbf{Purification (Fig.~\ref{fig:fig1}c).}
At \(R=50\) with heterogeneous sources (\(\mathrm{df}\in[6,30]\), \(T=15{,}000\)), the full balanced mixture yields weak contrast (\(|\hat\kappa|\approx3\times10^{-2}\)). Oracle purification selecting the top-\(m\) positive-kurtosis sources restores contrast to \(\approx0.43\) at \(m=5\) (\(\sim14\times\) gain), while a sample-based sign estimator (\(\tau=0.1\)) achieves comparable results. The monotone decay with \(m\) follows the \(1/m\) scaling of Theorem~\ref{thm:purification}.

\textbf{COBRE real-data sanity check (supportive, time-course level).}
We test an observable implication of the redundancy law using resting-state fMRI group-ICA outputs from the COBRE cohort ($n=155$). We run two group decompositions with model orders $k\in\{53,100\}$. For each subject, component time courses are z-scored across time, and we compute the sample excess kurtosis $\hat\kappa$ for each component. We summarize within-subject non-Gaussian contrast by the kurtosis-gap statistic $G:=\mathrm{mean}(\text{top-5 }|\hat\kappa|)-\mathrm{median}(|\hat\kappa|)$, and compare $G$ across model orders using a paired Wilcoxon signed-rank test (one-sided, testing $G_{53}>G_{100}$). Because time courses are standardized and inference is rank-based, this comparison is insensitive to scale differences between decompositions. Increasing model order from $k=53$ to $k=100$ produces a consistent reduction in $G$ across subjects (Fig.~\ref{fig:fig2}; most paired points lie below the identity line), with a highly significant paired Wilcoxon result ($p=1.74\times10^{-27}$). This provides a within-cohort sanity check of the predicted shrinkage of contrast with increasing model order, while we note that changing $k$ also alters the signal subspace and effective degrees of freedom, so the comparison is supportive rather than a controlled causal test. In the FNC insets, edge-level summary $S$ and fingerprint-level summary $F$ also differ between model orders (Level-1 $S$: $p=4.11\times10^{-2}$, $d=0.12$; Level-2 $F$: $p=7.58\times10^{-23}$, $d=-1.14$). The edge-level effect ($d=0.12$) is small and reported as descriptive context; the primary real-data evidence is the reduction in $G$.

\section{Conclusion}

We established three results for kurtosis-based ICA as mixture width increases: (1) balanced mixing induces \(1/R\) decay of kurtosis contrast; (2) projection-level active width must scale at most as \(\sqrt{T}\) for kurtosis-contrast viability (a necessary screening condition); and (3) sign-consistent subset selection restores non-vanishing contrast. These results help explain high-model-order instability in group neuroimaging: for well-conditioned blocks, balance is typical (Proposition~\ref{prop:block_balance}), so contrast collapse is structural rather than purely algorithmic. The theory is population-level and screening-oriented: it characterizes contrast availability and finite-sample viability, rather than providing a sufficiency guarantee for any specific ICA algorithm. In practice, it offers a simple rationale for model-order screening and for targeted contrast restoration via purification before re-estimation. A simple purification heuristic is near-oracle in our experiments. Our analysis is specific to kurtosis-based contrast and linear instantaneous ICA; other criteria (e.g., negentropy), structured/sparse mixing, and nonlinear or convolutive settings~\cite{hyvarinen2023nonlinear} require separate study. Future work includes adaptive purification within ICA and extensions to multi-domain settings such as IVA-L~\cite{anderson2014independent,kim2006independent}.

\bibliographystyle{IEEEtran}
\bibliography{ref}

\end{document}